\documentclass{article}

\usepackage{microtype}
\usepackage{graphicx}
\usepackage{subcaption}
\usepackage{booktabs} 
\usepackage{multirow}
\usepackage{enumitem}
\usepackage{makecell}
\usepackage{hyperref}
\usepackage{booktabs}

\usepackage[preprint]{icml2026}

\usepackage{amsmath}
\usepackage{amssymb}
\usepackage{mathtools}
\usepackage{amsthm}

\usepackage[capitalize,noabbrev]{cleveref}

\theoremstyle{plain}
\newtheorem{theorem}{Theorem}[section]

\newtheorem{lemma}[theorem]{Lemma}

\theoremstyle{definition}

\newtheorem{assumption}[theorem]{Assumption}
\theoremstyle{remark}

\usepackage[textsize=tiny]{todonotes}

\icmltitlerunning{ActTail: Global Activation Sparsity in Large Language Models}

\begin{document}

\twocolumn[
  \icmltitle{ActTail: Global Activation Sparsity in Large Language Models}



  \icmlsetsymbol{equal}{*}

  \begin{icmlauthorlist}
    \icmlauthor{Wenwen Hou}{equal,yyy}
    \icmlauthor{Xinyuan Song}{equal,comp}
    \icmlauthor{Shiwei Liu}{sch}

  \end{icmlauthorlist}

  \icmlaffiliation{yyy}{The Hong Kong University of Science and Technology(Guangzhou)}
  \icmlaffiliation{comp}{Emory University}
  \icmlaffiliation{sch}{ELLIS Institute Tübingen \& Max Planck Institute for Intelligent Systems}

  \icmlcorrespondingauthor{Shiwei Liu}{sliu@tue.ellis.eu}

  \icmlkeywords{Machine Learning, ICML}

  \vskip 0.3in
]



\printAffiliationsAndNotice{}  

\begin{abstract}
  Activation sparsity is a promising approach for accelerating large language model (LLM) inference by reducing computation and memory movement. However, existing activation sparsity methods typically apply uniform sparsity across projections, ignoring the heterogeneous statistical properties of Transformer weights and thereby amplifying the performance degradation. In this paper, we propose \textbf{ActTail}, a TopK magnitude-based \textbf{act}ivation sparsity method
with global activation sparsity allocation grounded in Heavy-\textbf{Tail}ed Self-Regularization (HT-SR) theory. Specifically, we capture this heterogeneity via the heavy-tail exponent $\alpha$ computed from each projection’s empirical spectral density (ESD), which is used as a quantitative indicator to assign projection-specific sparsity budgets. Importantly, we provide a theoretical analysis that establishes an explicit relationship between the activation sparsity ratio and the heavy-tail exponent $\alpha$ under the HT-SR regime, offering principled guidance for sparsity allocation beyond heuristic design. Experiments on LLaMA and Mistral models show that our method improves both perplexity and downstream task performance at high sparsity compared to uniform allocation. At 80\% sparsity, perplexity is reduced by 21.8\% on LLaMA-2-7B, 40.1\% on LLaMA-2-13B, and 9.4\% on Mistral-7B.
\end{abstract}

\section{Introduction}

Modern large language models have grown to tens or even hundreds of billions of parameters, leading to substantial memory and computational overhead during inference and posing challenges for deployment in settings with limited resources\citep{kaplan2020scalinglawsneurallanguage,NEURIPS2022_c1e2faff,touvron2023llama,yang2025qwen3}. To address these challenges, a broad class of efficiency techniques including weight pruning\citep{frantar2023sparsegpt,an2024fluctuationbased,ashkboos2024slicegpt,ling2024slimgpt,sun2024a,shen2025numerical}, and low rank decomposition\citep{wang2025svdllm,Huang_Huang_Wen_2025} has been proposed to reduce model size and computational cost. Weight pruning typically eliminates parameters or units(e.g., neurons or channels) deemed unimportant according to predefined importance metrics, resulting in sparse weight matrices.
Low-rank decomposition, on the other hand, reduces model complexity by factorizing weight matrices into low-rank components. Once applied, these methods induce a static model structure, whereby all input instances are processed using the same sparsity pattern or low-rank factorization. Therefore, these methods often suffer from accuracy loss and reduced robustness at high compression rates, which limits their effectiveness in practice. Recently, a complementary line of work on contextual activation sparsity has 
emerged\citep{liu2023dejaa,lee2024cats,zhang2025rsparse,liu2025rosa,liu2025trainingfree} inspired by lazy neuron phenomenon, which exploits input-dependent sparsity in hidden states. Because channels associated with zero valued activations do not participate in computation, both arithmetic operations and memory movement can be reduced by skipping the corresponding weight channels during memory access, enabling efficient inference under channel wise sparsity.

\begin{figure}[t]
    \centering
    \includegraphics[width=0.45\textwidth]{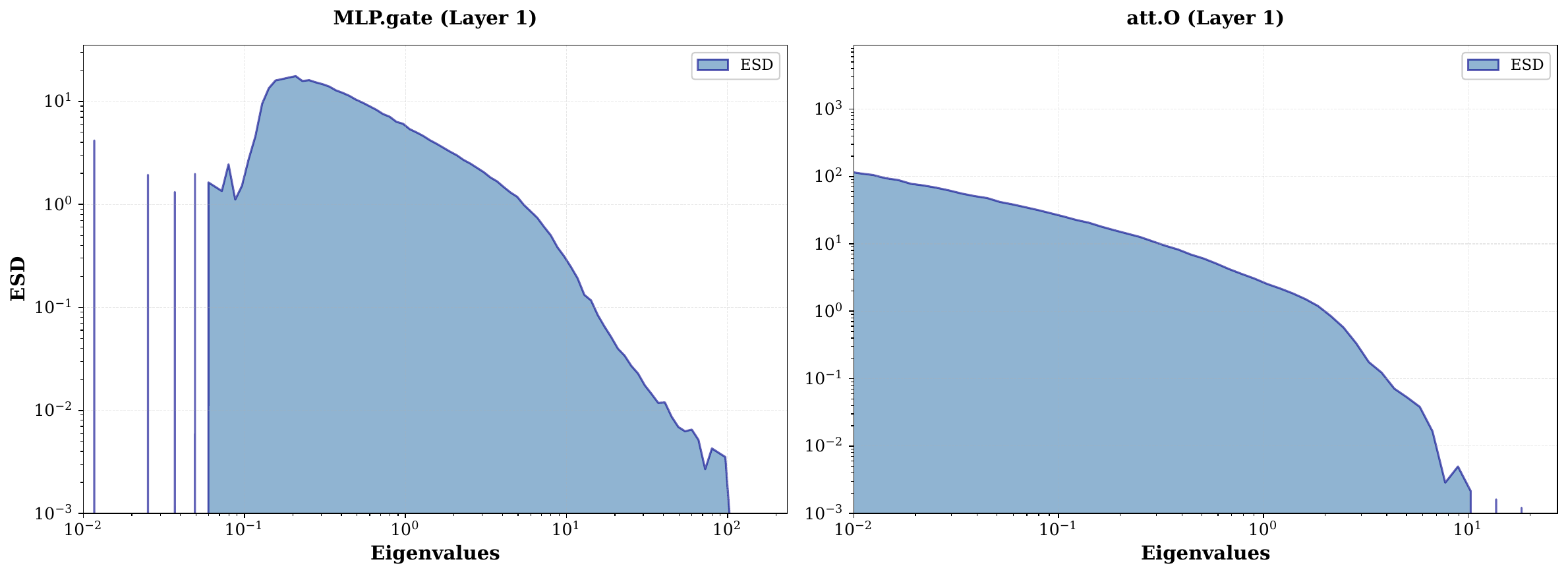}
    \caption{Eigenvalue spectral density (ESD) for MLP gate projection and attention output projection in Layer 1 of Llama2-13B model.}
    \label{fig:esd_layer1_mlp_gate_attn_o}

\end{figure}

Despite their success, most activation sparsity methods\citep{liu2025trainingfree,liu2025rosa} rely on either uniform rules or greedy search to determine sparsity rates across layers and projections. Uniform sparsity ignores the heterogeneous importance of different components and often leads to unnecessary accuracy loss at high sparsity levels, while greedy procedures lack theoretical guidance and incur substantial computational overhead. To derive more effective and theoretically guided sparsity allocation, we leverage insights from Heavy Tailed Self Regularization theory\citep{martin2017rethinking,martin2018implicit_JMLRversion,martin2019traditional,martin2020heavy,he2025alphadecay}. HT-SR theory studies trained weight matrices through empirical spectral densities (ESDs) of the corresponding correlation matrices(See Figure~\ref{fig:esd_layer1_mlp_gate_attn_o}), connecting spectral structure to matrix-level training quality. Building on this property, we developed an HT-SR theory guided activation sparsity scheme. 

This work focuses on input activation sparsity in Transformer projections, including the linear projections in self-attention and the feed-forward network (FFN). Input sparsity is induced using topK selection on the layer input hidden state, which keeps only the K largest entries and sets the rest to zero. Hence, the corresponding weight channels are unused and can be skipped. The heavy tail exponent $\alpha$ from Heavy Tailed Self Regularization theory is used to guide sparsity at the projection level. Specifically, each projection’s $\alpha$ is mapped to a projection specific sparsity ratio, which determines its topK retention rate during inference. In summary, the contributions are as follows:
\begin{itemize}[left = 0em]
    \item A data-independent activation sparsity method based on topK sparsification is introduced, which induces input activation sparsity without relying on calibration data.

    \item We establish a theoretical connection between Heavy Tailed Self Regularization and activation sparsity, showing how heavy tailed spectral statistics predict the sparsifiability of different projections and support projection level sparsity allocation.

    \item Extensive experiments on the LLaMA2 family, Mistral 7B, and Qwen1.5 7B demonstrate that ActTail consistently outperforms uniform allocation, improving both perplexity and downstream task performance at high sparsity.
\end{itemize}
Overall, ActTail offers a principled and practical framework for activation sparsity in large language models, combining theoretical insights from heavy-tailed self-regularization with efficient topK inference-time sparsification. By adapting sparsity to the intrinsic statistical properties of individual projections, ActTail achieves substantial efficiency gains while maintaining strong performance across perplexity and downstream benchmarks.

\section{Related Work}

\paragraph{Activation Sparsity in LLMs.}
Activation sparsity arises when a substantial fraction of a model’s hidden-state entries are zero, and can be viewed as a form of conditional computation that selectively activates parts of the network\citep{bengio2016conditionalcomputationneuralnetworks}.  Such sparsity naturally emerges in the intermediate representations of ReLU-based MLP blocks\citep{li2023the}. Prior work has demonstrated that activation sparsity can be exploited to accelerate LLM inference by avoiding unnecessary computation and memory transfer. \citet{liu2023dejaa} leverages zero-valued activations to skip transferring corresponding weight channels to GPU registers during decoding, while \citet{song2024powerinfer} and \citet{alizadeh2024llm} extend this idea to CPU offloading, significantly reducing data movement between CPU and GPU memory. However, modern leading LLMs increasingly adopt non-ReLU activation functions, such as SwiGLU\citep{so2021searching}, which do not inherently produce sparse activations\citep{touvron2023llama2,qwen1.5}.

Recent work focuses on inducing activation sparsity in modern LLM architectures that no longer rely on ReLU-based MLPs. \citet{mirzadehrelu} reintroduce sparsity by replacing SiLU or GeLU with ReLU followed by continued large-scale pretraining, while \citet{zhang2024relu2} systematically study alternative activations and identify Squared ReLU as an effective replacement. In the training-free setting, \citet{lee2024cats} induce activation sparsity in SwiGLU-based LLMs by applying magnitude-based thresholding to the gating output. \citet{liu2025trainingfree} extends magnitude-based activation sparsity to the input activations of both self-attention and MLP blocks. More recently, \citet{liu2025rosa} introduce layerwise orthogonal rotations to transform input activations into a space that is more amenable to sparsification, enabling consistent model-level sparsity and reliable wall-clock acceleration without additional training. \citet{shrestha2025polar} propose Polar Sparsity, which reveals a shift in sparsity efficiency from MLP layers to attention layers as batch size and sequence length increase.

\paragraph{Heavy-Tailed Self-Regularization Theory.}
Heavy-Tailed Self-Regularization (HT-SR) theory studies the empirical spectral density (ESD) of trained neural network weight matrices and relates their spectral structure to training quality, drawing on insights from statistical physics and random matrix theory\citep{martin2018implicit_JMLRversion,couillet2022random}. Under HT-SR theory, heavier-tailed ESDs of weight matrices indicate stronger optimization-induced correlations and higher layer training quality \citep{martin2019traditional}. Recently, HT-SR has been successfully applied across a range of model analysis and optimization tasks\citep{lu2024alphapruninga,he2025alphadecay}. In model selection, HT-SR metrics have been shown to reliably rank pretrained networks and checkpoints by quality without access to training data \citep{martin2021predictinga,yang2023test}.
In module-wise adaptive training, heavy-tailed spectral statistics have been used to diagnose layer-wise training imbalance and to guide adaptive optimization strategies\citep{zhou2023temperature}. 

\begin{figure*}[t]
  \centering
  \includegraphics[
    width=\textwidth,
    height=1.2\textheight,
    keepaspectratio,
    trim=0 80 0 60,
    clip
  ]{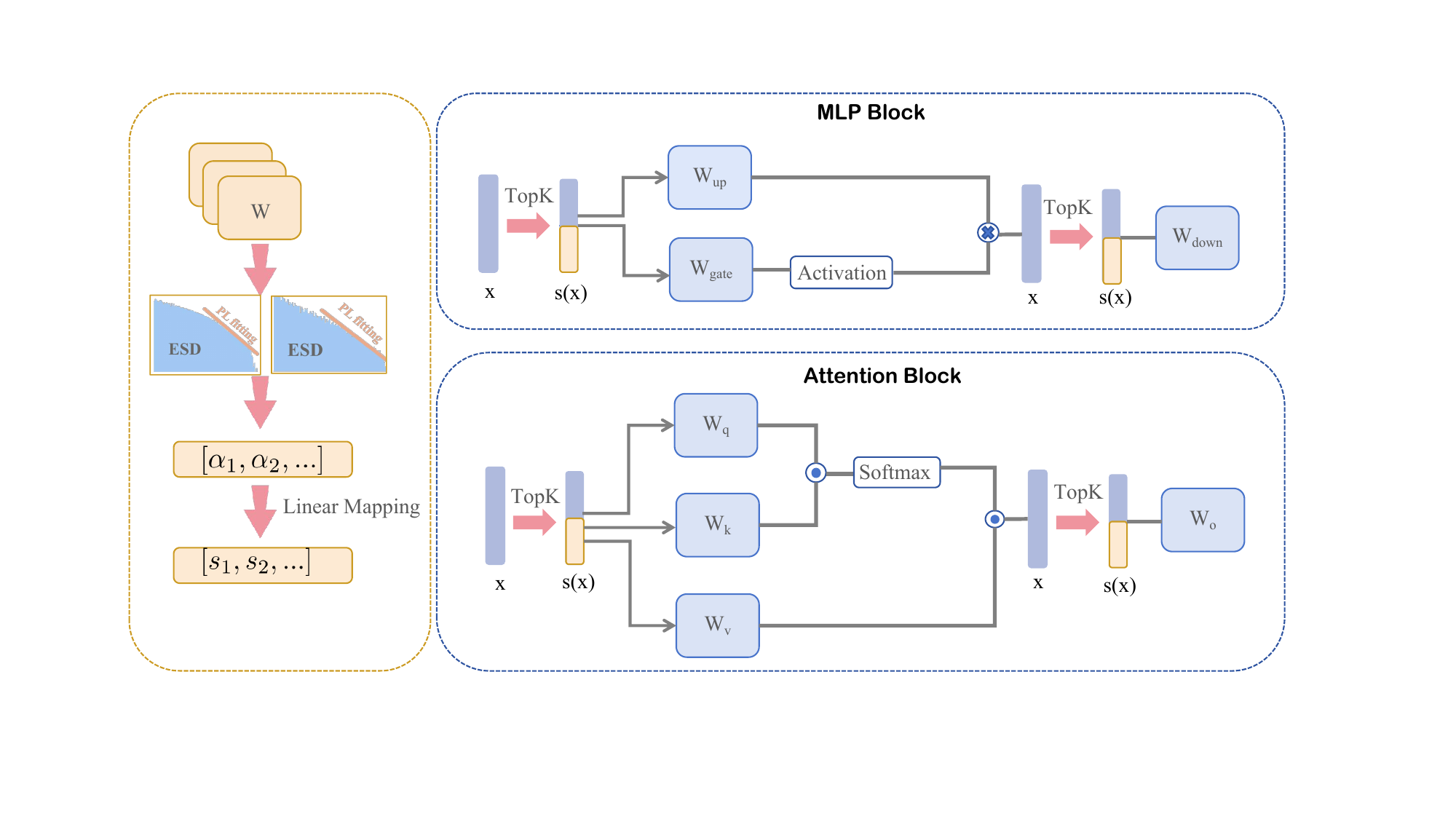}
  \caption{Overview of the ActTail workflow. ActTail first estimates heavy-tailed exponents from the empirical spectral density (ESD) of each projection, maps them to projection-level sparsity ratios via a linear rule, and applies TopK activation sparsity during inference.}
  \label{fig:acttail_workflow}

\end{figure*}

\section{Methodology}
\subsection{Notation}

We consider a Transformer with $L$ layers and focus on inference-time acceleration at the decoding stage.
At each decoding step, for each linear projection $p$ in layer $\ell$, the projection receives an input activation
$\mathbf h^{(\ell,p)} \in \mathbb{R}^{d_{\text{in}}^{(p)}}$.
The set of projections is indexed by
$p \in \mathcal P=\{\text{q},\text{k},\text{v},\text{o},\text{gate},\text{up},\text{down}\}$.

Each projection is parameterized by a weight matrix
$\mathbf W^{(\ell,p)} \in \mathbb{R}^{d_{\text{out}}^{(p)} \times d_{\text{in}}^{(p)}}$.
Input activation sparsity is applied during decoding via a binary mask
$\mathbf m^{(\ell,p)} \in \{0,1\}^{d_{\text{in}}^{(p)}}$,
producing a sparsified projection input
$\tilde{\mathbf h}^{(\ell,p)}=\mathbf m^{(\ell,p)}\odot\mathbf h^{(\ell,p)}$
and the corresponding projection output
$\mathbf a^{(\ell,p)}=\mathbf W^{(\ell,p)}\tilde{\mathbf h}^{(\ell,p)}$.
The sparsity ratio $s^{(\ell,p)}\in[0,1]$ is defined as the fraction of zero entries in $\mathbf m^{(\ell,p)}$.

For each projection $p$ in layer $\ell$, the correlation matrix is $\mathbf X^{(\ell,p)}=\mathbf W^{(\ell,p)\top}\mathbf W^{(\ell,p)}$. Let $\{\lambda_j\}_{j=1}^{n}$ be the eigenvalues of $\mathbf X^{(\ell,p)}$ and the empirical spectral density (ESD) is formulated as $\mu_{\mathbf X^{(\ell,p)}}:=\frac{1}{n}\sum_{j=1}^{n}\delta_{\lambda_j}$.

For the ESD $\mu_{\mathbf X^{(\ell,p)}}$,
 \begin{equation}
p(\lambda)\propto \lambda^{-\alpha^{(\ell,p)}}, 
\qquad 
\lambda_{\min}^{(\ell,p)}<\lambda<\lambda_{\max}^{(\ell,p)},
 \end{equation}
is the fitted power-law density function.
\subsection{TopK Input Activation Sparsity during Inference}
\label{sec:topk_input_sparsity}
In this work, we focus on input activation sparsity for the decoding stage, targeting the linear projections in each Transformer layer.
For projection $p$ in layer $\ell$, a binary mask $\mathbf m^{(\ell,p)}$ is applied to the hidden state $\mathbf h^{(\ell)}$ before the projection.
If an entry of $\mathbf h^{(\ell)}$ is zero, then the corresponding column of $\mathbf W^{(\ell,p)}$ is unused and can be skipped, reducing both computation and weight memory access.
Formally,
\begin{equation}
\mathbf a^{(\ell,p)} = \mathbf W^{(\ell,p)} (\mathbf m^{(\ell,p)}\odot\mathbf h^{(\ell)})
=\mathbf W^{(\ell,p)} \tilde{\mathbf h}^{(\ell,p)},
\end{equation}
where the sparsity ratio $s^{(\ell,p)}$ is the fraction of zeros in $\mathbf m^{(\ell,p)}$.

The mask $\mathbf m^{(\ell,p)}$ is constructed in a magnitude based way.
Prior work such as \citep{liu2025trainingfree} uses thresholding, which zeros out entries whose magnitudes fall below a learned threshold. However, it depends on calibration data and does not guarantee an exact sparsity ratio.
Instead, we propose a TopK rule to build $\mathbf m^{(\ell,p)}$ by keeping the $K^{(\ell,p)}$ largest-magnitude entries of $\mathbf h^{(\ell)}$ and masking out the rest.
Unlike thresholding, TopK fixes the number of active dimensions, so the sparsity ratio is explicitly controlled as
\begin{equation}
s^{(\ell,p)} = 1-\frac{K^{(\ell,p)}}{d_{\text{in}}^{(p)}} .
\end{equation}
This provides stable, data-independent sparsity control during inference.

\begin{table}[t]
\centering
\scriptsize
\setlength{\tabcolsep}{3pt}
\resizebox{\columnwidth}{!}{%
\begin{tabular}{lcccc}
\toprule
\hline
\textbf{Method} &
\multicolumn{1}{c}{\textbf{LLaMA2}} &
\multicolumn{1}{c}{\textbf{LLaMA2}} &
\multicolumn{1}{c}{\textbf{Mistral}} &
\multicolumn{1}{c}{\textbf{Qwen1.5}} \\
&
\multicolumn{1}{c}{\textbf{7B}} &
\multicolumn{1}{c}{\textbf{13B}} &
\multicolumn{1}{c}{\textbf{7B}} &
\multicolumn{1}{c}{\textbf{7B}} \\
\midrule
Dense (0\%)      
& 5.11 & 4.57 & 4.91 & 7.36 \\
\midrule
\textsc{TEAL}$_{\text{Top-}K}$ (60\%)   
& 5.94 & 5.15 & 5.57 & 8.47 \\
ActTail (60\%)   
& \textbf{5.86} & \textbf{5.14} & \textbf{5.57} & \textbf{8.42} \\
\midrule
\textsc{TEAL}$_{\text{Top-}K}$ (70\%)   
& 9.35 & 6.88 & 7.84 & 12.28 \\
ActTail (70\%)   
& \textbf{8.09} & \textbf{6.25} & \textbf{7.70} & \textbf{11.40} \\
\midrule
\textsc{TEAL}$_{\text{Top-}K}$ (80\%)   
& 36.37 & 29.66 & 23.87 & 57.24 \\
ActTail (80\%)   
& \textbf{28.43} & \textbf{17.76} & \textbf{21.63} & \textbf{56.21} \\
\hline
\bottomrule
\end{tabular}%
}
\caption{Perplexity on WikiText2 under different activation sparsity levels. Lower is better.}
\label{tab:ppl_main}

\end{table}

\subsection{Heavy-Tailed Self-Regularization Theory (HT-SR)}
\paragraph{Motivation: from HT-SR theory to activation sparsity.}
Given a projection weight matrix $\mathbf W^{(\ell,p)}$, HT-SR studies the empirical spectral density (ESD) of its correlation matrix $\mathbf \Sigma^{(\ell,p)}=\mathbf W^{(\ell,p)\top}\mathbf W^{(\ell,p)}$. HT-SR theory links the spectral structure of trained weight matrices to the quality of learned representations \citep{MM21a_simpsons_TR,martin2020heavy,martin2021predictinga,he2025alphadecay}. 
This connection arises because the ESD reflects the correlation structure encoded in $\mathbf W^{(\ell,p)}$.
Effective training induces stronger correlations, which manifest as heavy-tailed spectral behavior and indicate higher matrix training quality.

This perspective provides a natural motivation for input activation sparsity.
Input sparsification removes a subset of hidden-state dimensions before multiplying by $\mathbf W^{(\ell,p)}$, which is equivalent to skipping the corresponding columns of $\mathbf W^{(\ell,p)}$ during decoding.
According to HT-SR theory, projections with more heavy-tailed ESDs tend to encode stronger learned structure, so they should be sparsified more conservatively.
Conversely, projections with lighter-tailed spectra can tolerate higher input sparsity.
Therefore, HT-SR theory offers a principled signal for allocating activation sparsity across projections, replacing uniform rules or costly search with a theory-guided allocation strategy.

\paragraph{Heavy-tailed Metric.}
To quantify the heavy-tailedness of ESDs, HT-SR adopts a \emph{shape metric} based on a power-law fit. Given the ESD of the correlation matrix $\mathbf X^{(\ell,p)}$, we fit a power-law (PL) on the interval $(\lambda_{\min}^{(\ell,p)},\lambda_{\max}^{(\ell,p)})$, taking the form
\begin{equation}
p(\lambda)\propto \lambda^{-\alpha^{(\ell,p)}}, 
\qquad 
\lambda_{\min}^{(\ell,p)}<\lambda<\lambda_{\max}^{(\ell,p)}.
\end{equation}
Here, $p(\lambda)$ denotes the density of eigenvalues $\lambda$ within the specified interval. The exponent $\alpha^{(\ell,p)}$ provides a quantitative measure of the ESD’s tail heaviness.
In practice, we compute $\alpha^{(\ell,p)}$ for each layer $\ell$ and module $p$ (e.g., attention/MLP projections).

For estimating $\alpha$, we apply the Hill estimator\citep{8b55684a-14a0-31e4-b039-d044b4625cb8,xiao2023heavytailedregularizationweightmatrices,zhou2023temperature}, denoted as $\mathrm{PL\_Alpha\_Hill}$,
\begin{equation}
\mathrm{PL\_Alpha\_Hill}
=1+\frac{k}{\sum_{i=1}^{k}\ln \left(\frac{\lambda_{n-i+1}}{\lambda_{n-k}}\right)},
\end{equation}
where $\{\lambda_i\}_{i=1}^{n}$ are the eigenvalues in ascending order and $k$ controls the lower cutoff used for tail fitting. As shown in Figure~\ref{fig:alpha_across_layers}, the estimated $\alpha$ values vary across both depth and projection types, indicating heterogeneous heavy-tailed behavior in different components of the Transformer.

\begin{figure}[t]
    \centering
    \includegraphics[width=0.45\textwidth]{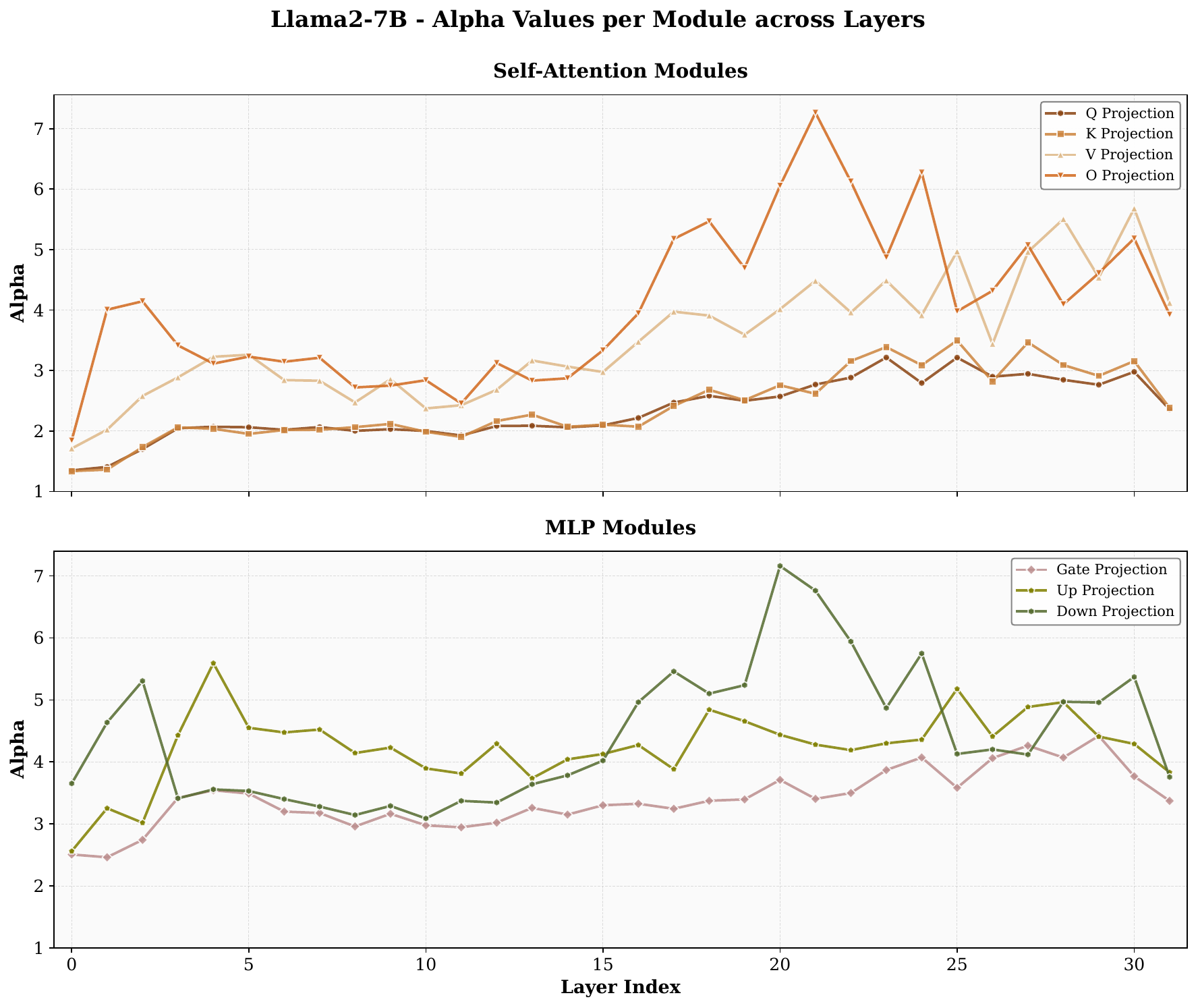}
    \caption{Power-law exponents $\alpha$ (Hill estimator) of the ESDs for different modules across layers in LLaMA2-7B. Smaller $\alpha$ indicates heavier tails.}
    \label{fig:alpha_across_layers}

\end{figure}

\subsection{ActTail: Global Activation Sparsity Allocation}
Motivated by the observed spectral diversity across projections, ActTail therefore uses each projection’s heavy-tailed exponent $\alpha^{(\ell,p)}$ to set its TopK input sparsity ratio $s^{(\ell,p)}$. ActTail computes $\alpha^{(\ell,p)}$ from the ESD of the correlation matrix $\mathbf X^{(\ell,p)}=\mathbf W^{(\ell,p)\top}\mathbf W^{(\ell,p)}$ for each projection weight matrix $\mathbf W^{(\ell,p)}$. Then, we map the heavy-tailed exponents $\alpha^{(\ell,p)}$ of all projections to
projection-level sparsity ratios through a monotone affine function. Figure~\ref{fig:acttail_workflow} provides an overview of the ActTail workflow, from estimating heavy-tailed exponents to allocating projection-level sparsity and applying TopK gating during inference.

\begin{equation}
s^{(\ell,p)}
=
\eta\left[
\frac{\alpha^{(\ell,p)}-\alpha_{\min}}
{\alpha_{\max}-\alpha_{\min}}(s_2-s_1)+s_1
\right],
\label{eq:acttail_map}
\end{equation}
where $\alpha_{\min}$ and $\alpha_{\max}$ are the minimum and maximum values of
$\alpha^{(\ell,p)}$ across all projections, and $(s_1,s_2)$ controls the degree
of non-uniformity. The normalization factor $\eta$ rescales all projection sparsities to satisfy a target global sparsity level $S$, i.e.,
$\sum_{(\ell,p)} s^{(\ell,p)} d^{(\ell,p)} = S \sum_{(\ell,p)} d^{(\ell,p)}$,
where $d^{(\ell,p)}$ denotes the parameter count of $\mathbf W^{(\ell,p)}$.

\section{Theoretical Analysis}
\begin{figure*}[t]
  \centering
  \includegraphics[width=0.85\textwidth]{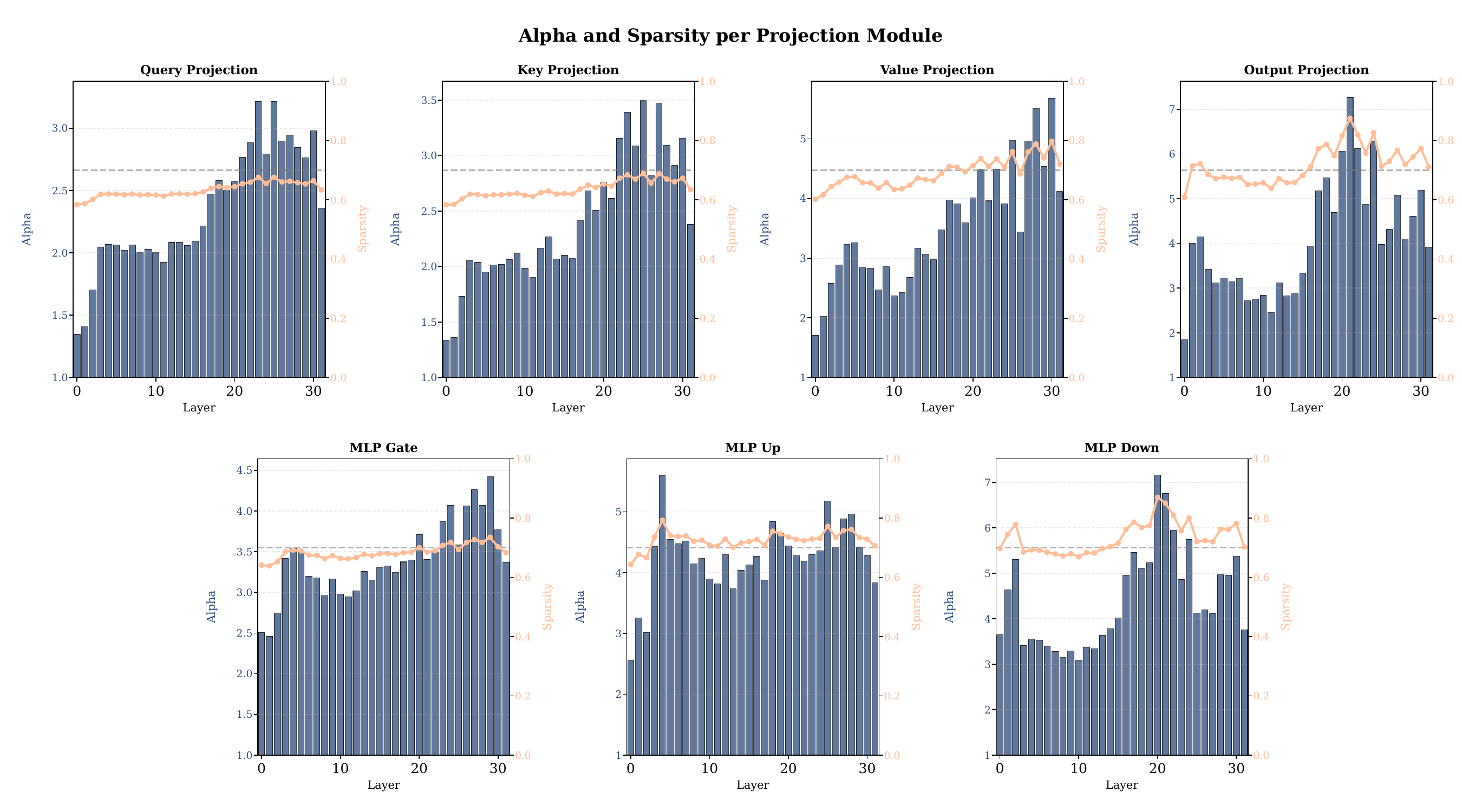}
  \caption{Alpha and sparsity per projection module across layers.}
  \label{fig:appendix-alpha-sparsity}

\end{figure*}

Section~3 motivates ActTail from the perspective of Heavy-Tailed Self-Regularization (HT-SR) theory and introduces a linear mapping from projection-level heavy-tail exponents to sparsity ratios. However, it lacks a theoretical explanation why projections with more heavy-tailed spectra should admit more aggressive TopK activation sparsification. In this section, we provide a rigorous theoretical analysis establishing a formal link between heavy-tailed spectral behavior and activation sparsity under TopK sparsity. 

To prove that heavy-tailed spectra lead to activation sparsity, we first establish a generic TopK approximation guarantee.
Lemma~\ref{lem:stechkin_topk} provides an $\ell_2$ best $K$-term (TopK) approximation bound in terms of the weak-$\ell_p$
(Lorentz) quasi-norm, and will serve as the first step of our proof chain.

\begin{lemma}[$\ell_2$ Best $K$-term approximation]
\label{lem:stechkin_topk}
Let $d\ge 1$ and $p\in(2,\infty)$. 
For $y\in\mathbb{R}^d$, let $|y|_{(1)}\ge\cdots\ge |y|_{(d)}$ denote the decreasing rearrangement of
$\{|y_1|,\ldots,|y_d|\}$.
Let $T_K(y)$ be the TopK truncation that keeps the $K$ largest-magnitude coordinates of $y$ and sets the rest to zero. Then for any $K\in\{1,\dots,d\}$,
\begin{equation}
\label{eq:stechkin}
\|y-T_K(y)\|_2  \le  C_p   \|y\|_{p,\infty}  K^{\frac12-\frac1p},
\end{equation}
where $C_p>0$. $\|\cdot\|_{p,\infty}$ is the weak-$\ell_p$ (Lorentz) quasi-norm $\|y\|_{p,\infty}:=\sup_{1\le j\le d}   j^{1/p} |y|_{(j)}$.
\end{lemma}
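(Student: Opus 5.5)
The plan is the standard Stechkin-type argument. The TopK residual consists exactly of the $d-K$ smallest-magnitude coordinates. Hence
\[
\|y-T_K(y)\|_2^2=\sum_{j=K+1}^{d}|y|_{(j)}^2 ,
\]
and the task is to bound each tail coordinate through the weak-$\ell_p$ quasi-norm. By definition of $\|y\|_{p,\infty}$ we have $|y|_{(j)}\le j^{-1/p}\|y\|_{p,\infty}$ for every $j$. Substituting gives
\[
\|y-T_K(y)\|_2^2\le \|y\|_{p,\infty}^2\sum_{j=K+1}^{d} j^{-2/p}.
\]
The rest of the proof is a comparison of this sum with an integral.

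For $p\in(2,\infty)$ the exponent satisfies $2/p<1$, so the series $\sum_j j^{-2/p}$ diverges, and the tail sum is governed by its upper endpoint $d$, not by $K$. Using $j^{-2/p}\le\int_{j-1}^{j}t^{-2/p}\,dt$, I would bound
\[
\sum_{j=K+1}^{d} j^{-2/p}\le \int_{K}^{d}t^{-2/p}\,dt\le \frac{d^{1-2/p}}{1-2/p}.
\]
Since $K\ge1$ and $\tfrac12-\tfrac1p>0$, we have $K^{\frac12-\frac1p}\ge1$. Therefore
\[
\|y-T_K(y)\|_2\le (1-2/p)^{-1/2}\, d^{\frac12-\frac1p}\,\|y\|_{p,\infty}\le C_p\,\|y\|_{p,\infty}\,K^{\frac12-\frac1p}
\qquad\text{with } C_p:=(1-2/p)^{-1/2}d^{\frac12-\frac1p}.
\]
This yields \eqref{eq:stechkin} for every $K\in\{1,\dots,d\}$.

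The main obstacle is the dependence of the constant on $d$, and I expect it cannot be removed in the regime $p>2$ as stated. Take $y_j=j^{-1/p}$ and $K=1$. Then $\|y\|_{p,\infty}=1$, while $\|y-T_1(y)\|_2\asymp d^{\frac12-\frac1p}\to\infty$. So no $d$-free $C_p$ can work with the exponent $K^{\frac12-\frac1p}$.

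I would therefore state the lemma with $C_p=C_{p,d}$ as above. I would also remark that the dimension-free, decaying version
\[
\|y-T_K(y)\|_2\le (2/p-1)^{-1/2}\,\|y\|_{p,\infty}\,K^{\frac12-\frac1p}
\]
holds when $p\in(0,2)$. Its proof is the same argument with the integral taken over $(K,\infty)$, which converges because $2/p>1$. That version is presumably what the later proof chain actually needs, since it is the one in which heavier tails (smaller $p$) give faster TopK decay in $K$.
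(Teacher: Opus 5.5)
Your proof is correct. It follows the paper's skeleton: decreasing rearrangement, the pointwise bound $|y|_{(j)}\le\|y\|_{p,\infty}\,j^{-1/p}$, then a sum--integral comparison. It departs from the paper at exactly the step where the paper's argument fails. The paper bounds $\sum_{j>K}j^{-2/p}$ by $\int_K^\infty t^{-2/p}\,dt=\frac{p}{2-p}K^{1-2/p}$ and takes $C_p=\sqrt{p/(2-p)}$. For $p\in(2,\infty)$ that integral diverges and $p/(2-p)<0$, so the paper's constant is not even real in the stated range. What the paper actually carries out is your closing remark: the dimension-free Stechkin bound for $p\in(0,2)$, with the identical constant $(2/p-1)^{-1/2}=\sqrt{p/(2-p)}$. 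Stopping the integral at $d$, as you do, is what makes the argument valid for $p>2$. Your example $y_j=j^{-1/p}$, $K=1$ correctly shows that the resulting growth $C_{p,d}\gtrsim d^{\frac12-\frac1p}$ cannot be avoided. So the lemma holds only if ``$C_p$'' is allowed to depend on $d$.

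One refinement is worth making. You discard $K^{1-2/p}$ and then inflate by $K^{\frac12-\frac1p}\ge 1$, so your final bound increases with $K$ and says nothing about how the TopK error decays. Keeping the dropped term gives the more informative $\|y-T_K(y)\|_2^2\le\|y\|_{p,\infty}^2\,\frac{d^{1-2/p}-K^{1-2/p}}{1-2/p}$, which vanishes at $K=d$. Either way, for $p>2$ no bound of the form $C\,\|y\|_{p,\infty}K^{\frac12-\frac1p}$ can be inverted into a lower bound on $K$ that forces the error below $\varepsilon$. Yet that is exactly how Theorem~\ref{thm:hw_bridge} uses the lemma, with $p(\alpha)=2(\alpha-1)>2$. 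Your suspicion is therefore right: the downstream chain needs the $p<2$ regime. That regime corresponds to $\alpha<2$, which is incompatible with the paper's standing assumption $\alpha>2$.
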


\begin{proof}
By Stechkin inequality~\cite{jahn2021stechkin} or optimal $K$-term approximation bound for Lorentz spaces~\cite{devore1998nonlinear}, 
we rearrange the bound
\begin{equation}
    |y|_{(j)}\le \|y\|_{p,\infty}  j^{-1/p}
\end{equation}
then computes
 \begin{equation}
 \begin{aligned}
\|y-T_K(y)\|_2^2
&=
\sum_{j>K} |y|_{(j)}^2
\le
\|y\|_{p,\infty}^2 \sum_{j>K} j^{-2/p}\\
&\le
\|y\|_{p,\infty}^2 \int_K^\infty t^{-2/p} dt\\
&=
\frac{p}{2-p} \|y\|_{p,\infty}^2 K^{1-2/p}.
\end{aligned}
 \end{equation}
Taking square roots yields \eqref{eq:stechkin} with $C_p=\sqrt{\frac{p}{2-p}}$.
\end{proof}

Lemma~\ref{lem:stechkin_topk} quantifies how well a vector $y$ can be approximated in $\ell_2$ by its TopK truncation $T_K(y)$, with the approximation quality controlled by the weak-$\ell_p$ quasi-norm $\|y\|_{p,\infty}$. The theorem upper-bounds the $\ell_2$ error of TopK truncation by
$\|y-T_K(y)\|_2 \le C_p \|y\|_{p,\infty} K^{\frac12-\frac1p}$.
In particular, if $y\in\ell_{p,\infty}$ (i.e., $\|y\|_{p,\infty}<\infty$), then the TopK error in $\ell_2$
decays as $K^{1/2-1/p}$.

To apply Lemma~\ref{lem:stechkin_topk}, we next relate the weak-$\ell_p$ behavior of $y=Wx$ to the heavy-tail ESD exponent $\alpha$ of $X=W^\top W$ via spectral energy concentration. Lemma~\ref{lem:karamata_energy_capture} connects the fitted heavy-tail exponent $\alpha$ of the ESD to a concrete
energy-capture law for the spectrum.
It shows how the fraction of spectral energy contained in the top-$qn$ eigenvalues scales with $q$.

\begin{lemma}[power-law tail integral]
\label{lem:karamata_energy_capture}
Let $X\in\mathbb{R}^{n\times n}$ be positive semi-definite matrix (PSD) with eigenvalues $\lambda_1\ge\cdots\ge\lambda_n\ge 0$.
Assume the ESD of $X$ follows a power-law tail with exponent $\alpha>2$, there exist constants $c_0,c_1>0$ and a smooth function $L(\cdot)$ such that for large $t$,
 \begin{equation}
\mathbb{P}(\Lambda\ge t) =  c_0  t^{1-\alpha} L(t)
\qquad (\Lambda \sim \text{ESD of }X),
 \end{equation}
Moreover, Define the \emph{top-$q$ spectral energy capture ratio} for $q\in(0,1)$:
 \begin{equation}
R(q) := \frac{\sum_{j\le qn}\lambda_j}{\sum_{j=1}^n \lambda_j}.
 \end{equation}
Then, for $q$ in the regime where the power-law fit holds, one has the scaling law
\begin{equation}
\label{eq:Rq_scaling}
R(q) \asymp  q^{\frac{\alpha-2}{\alpha-1}},
\end{equation}
\end{lemma}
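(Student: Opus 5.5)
The plan is to read $R(q)$ as a ratio of two integrals against the ESD and evaluate both with the power-law tail. Write $\mu$ for the ESD of $X$ and $\bar F(t):=\mathbb{P}(\Lambda\ge t)=c_0t^{1-\alpha}L(t)$. For $q\in(0,1)$ let $t_q$ be the upper $q$-quantile, so $\bar F(t_q)\approx q$. Up to an $O(1/n)$ rounding error, the top $\lfloor qn\rfloor$ eigenvalues are exactly the atoms of $\mu$ lying in $[t_q,\infty)$. Hence
\begin{equation*}
R(q)=\frac{\int_{t_q}^{\infty}\lambda\,d\mu(\lambda)}{\int_0^\infty \lambda\,d\mu(\lambda)}+O(1/n).
\end{equation*}
The denominator equals $\frac1n\operatorname{tr}X$. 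Since $\alpha>2$, the tail exponent $1-\alpha<-1$, so the first moment is finite. The denominator is therefore a positive constant $m_1$, independent of $q$, and it can be absorbed into $\asymp$.

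\textbf{Step 1 (tail integral).} Integrate by parts, in the Stieltjes sense, to get
\begin{equation*}
\int_{t}^{\infty}\lambda\,d\mu=t\,\bar F(t)+\int_t^\infty \bar F(s)\,ds .
\end{equation*}
Then apply Karamata's theorem for regularly varying functions. Treat $L$ as slowly varying; the statement calls it smooth, and I would assume slow variation, since this is what a power-law fit implicitly means. Because $\bar F(s)=c_0s^{1-\alpha}L(s)$ with $1-\alpha<-1$, Karamata gives
\begin{equation*}
\int_t^\infty \bar F(s)\,ds\sim \frac{t\,\bar F(t)}{\alpha-2}.
\end{equation*}
Combining the two terms yields a tail energy of $\frac{\alpha-1}{\alpha-2}\,t\,\bar F(t)$ asymptotically.

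\textbf{Step 2 (quantile inversion).} Solve $c_0t_q^{1-\alpha}L(t_q)=q$. By the inversion property of regularly varying functions, $t_q=q^{-1/(\alpha-1)}\tilde L(1/q)$ for some slowly varying $\tilde L$. Substituting into Step 1 gives
\begin{equation*}
\int_{t_q}^\infty\lambda\,d\mu\;\asymp\; t_q\,q\;=\;q^{1-\frac{1}{\alpha-1}}\tilde L(1/q)\;=\;q^{\frac{\alpha-2}{\alpha-1}}\tilde L(1/q).
\end{equation*}
Dividing by $m_1$ gives $R(q)\asymp q^{(\alpha-2)/(\alpha-1)}$, where $\asymp$ holds up to constants and slowly varying factors. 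Within the fitted window $(\lambda_{\min},\lambda_{\max})$, $L$ is bounded above and below, so the slowly varying factors can be absorbed into the constants.

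\textbf{Main obstacle.} The hardest part is making ``for $q$ in the regime where the power-law fit holds'' precise, because the ESD is a finite atomic measure and the tail is truncated at $\lambda_{\max}$. First, $q$ must satisfy $1/n\ll q\ll 1$, so that $t_q\in(\lambda_{\min},\lambda_{\max})$ and the rounding of $qn$ is negligible. Second, the truncation at $\lambda_{\max}$ must contribute only a lower-order term. For this I would bound the contribution above $\lambda_{\max}$ directly: it is at most $\lambda_{\max}\bar F(\lambda_{\max})$, which is dominated by $t_q q$ once $t_q\ll\lambda_{\max}$. Third, two-sided bounds $a\le L\le b$ on the window replace the asymptotic $\sim$ in Karamata by explicit constants, giving upper and lower bounds $\frac{a}{b}\frac{\alpha-1}{\alpha-2}$-type multiples of $q^{(\alpha-2)/(\alpha-1)}$. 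That is exactly the $\asymp$ claimed in the lemma.
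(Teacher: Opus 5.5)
Your proposal is correct and follows essentially the same route as the paper. The shared steps are Karamata's theorem for the upper-tail first moment, inversion of the regularly varying tail to get $t_q\asymp q^{-1/(\alpha-1)}$ (the paper's de Bruijn-conjugate step), and a transfer from the ESD integral to the discrete ratio $R(q)$; you derive the tail moment explicitly by Stieltjes integration by parts, getting $\frac{\alpha-1}{\alpha-2}\,t\bar F(t)\asymp t^{2-\alpha}L(t)$, which is the paper's $M_1(t)$. Where the paper only appeals to a ``Riemann-sum/quantile approximation hypothesis'' for the transfer, you make the regime $1/n\ll q$, $t_q\ll\lambda_{\max}$ and the window bounds $a\le L\le b$ explicit, which is more careful (strictly, the power-law mass beyond $\lambda_{\max}$ is about $\frac{\alpha-1}{\alpha-2}\lambda_{\max}\bar F(\lambda_{\max})$ rather than at most $\lambda_{\max}\bar F(\lambda_{\max})$, but since $t\bar F(t)\asymp t^{2-\alpha}$ is decreasing it is still lower order).
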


\begin{proof}
Let $\bar F(t):=\mathbb{P}(\Lambda\ge t)$. We have:
\begin{equation}
    \bar F(t)=c_0 t^{1-\alpha}L(t). 
\end{equation}
It means $\bar F$ is power law distribution of index $(1-\alpha)$. Let $\tau_q$ be the upper-tail quantile defined by $\bar F(\tau_q)=q$.
We now justify the scaling $\tau_q \asymp q^{-1/(\alpha-1)}$ under the regularly varying tail assumption.
\begin{equation}
\label{eq:quantile_def}
q
=
c_0  \tau_q^{ 1-\alpha} L(\tau_q).
\end{equation}
Rearranging \eqref{eq:quantile_def} gives
\begin{equation}
\label{eq:quantile_rearrange}
\tau_q^{ \alpha-1}
=
\frac{c_0 L(\tau_q)}{q}.
\end{equation}
Since $L(\cdot)$ is smooth, the factor $L(\tau_q)$ changes only sub-polynomially with $\tau_q$.
In particular, there exists another smooth function $\widetilde L(\cdot)$ (i.e. the
de Bruijn conjugate~\cite{bingham1987regular} of $L$) such that the asymptotic inverse of $t\mapsto t^{1-\alpha}L(t)$ satisfies
 \begin{equation}
\tau_q
=
\Big(\frac{c_0}{q}\Big)^{\frac{1}{\alpha-1}} \widetilde L \Big(\frac{1}{q}\Big),
\qquad q\downarrow 0,
 \end{equation}
and therefore
 \begin{equation}
\tau_q \asymp q^{-\frac{1}{\alpha-1}}
\quad\text{as } q\downarrow 0,
 \end{equation}
where the $\asymp$ relation hides only multiplicative constants and the factor
$\widetilde L(1/q)$.

Now consider the upper-tail first moment
 \begin{equation}
M_1(t):=\mathbb{E}\big[\Lambda \mathbf{1}\{\Lambda\ge t\}\big].
 \end{equation}
By Karamata's theorem~\cite{resnick1987extreme} for power law tails, since $\alpha>2$,
 \begin{equation}
M_1(t) \asymp  t^{2-\alpha} L(t),
 \end{equation}
again up to multiplicative constants. Therefore,
 \begin{equation}
\frac{\mathbb{E}\big[\Lambda \mathbf{1}\{\Lambda\ge \tau_q\}\big]}{\mathbb{E}[\Lambda]}
 \asymp 
\tau_q^{ 2-\alpha}
 \asymp 
\Big(q^{-1/(\alpha-1)}\Big)^{2-\alpha}
=
q^{\frac{\alpha-2}{\alpha-1}}.
 \end{equation}
Finally, the Riemann-sum/quantile approximation hypothesis~\cite{vandervaart1996asymptotic} transfers this distributional energy-capture ratio
to the discrete ratio $R(q)$, yielding \eqref{eq:Rq_scaling}.
\end{proof}

Lemma~\ref{lem:karamata_energy_capture} states that under a power-law ESD tail with exponent $\alpha>2$,
the top-$q$ spectral energy capture ratio obeys $R(q)\asymp q^{\frac{\alpha-2}{\alpha-1}}$ in the regime where the fit holds,
so $\alpha$ directly controls how quickly spectral energy concentrates into the leading eigenvalues.

Building on this $\alpha$-dependent spectral concentration, we next show how it transfers to the coordinate magnitudes of $y=Wx$, yielding an explicit $\alpha$-indexed activation sparsity rule. We now connect the heavy-tailed ESD exponent $\alpha$ to projection-level activation sparsity. Under Assumption~\ref{assump:hw_bridge}, the key step is to show that the activation $y=Wx$ is weak-$\ell_{p(\alpha)}$ with
$p(\alpha)=2(\alpha-1)$, which, together with Lemma~\ref{lem:stechkin_topk}, yields an explicit choice of $K$ and hence an
explicit sparsity ratio $s(\alpha,\varepsilon)$.

\begin{table*}[t]
\centering
\caption{Downstream performance under different sparsity levels.
Best results at each sparsity level are in \textbf{bold}.}
\small
\setlength{\tabcolsep}{6pt}
\resizebox{0.85\textwidth}{!}{%
\begin{tabular}{l|c|l|c c c c c c c}
\hline
\textbf{Model} & \textbf{Sparsity} & \textbf{Method} &
\textbf{Avg.} & \textbf{MMLU} & \textbf{ARC-c} & \textbf{HellaSwag} & \textbf{BOOLQ} & \textbf{PIQA} & \textbf{WinoGrande} \\
\hline
\multirow{5}{*}{LLaMA2-7B}
& 0\% & Dense
& 62.72 & 45.84 & 43.43 & 57.13 & 77.74 & 78.07 & 74.11 \\
& 70\% & \textsc{TEAL}$_{\text{Top-}K}$
& 56.02 & 26.53 & 28.50 & 41.84 & 65.66 & 70.67 & 50.36 \\
& 70\% & ActTail
& \textbf{58.06} & \textbf{27.35} & \textbf{30.97} & \textbf{44.07} & \textbf{66.97} & \textbf{72.25} & \textbf{56.12} \\
\cline{2-10}
& 80\% & \textsc{TEAL}$_{\text{Top-}K}$
& 47.87 & 23.58 & 19.62 & 28.97 & 44.62 & 60.50 & 48.78 \\
& 80\% & ActTail
& \textbf{49.27} & \textbf{25.32} & \textbf{21.08} & \textbf{30.09} & \textbf{56.94} & \textbf{61.92} & \textbf{51.22} \\
\hline
\hline
\multirow{5}{*}{LLaMA2-13B}
& 0\% & Dense
& 65.93 & 55.13 & 48.46 & 60.06 & 80.61 & 79.05 & 72.22 \\
& 70\% & \textsc{TEAL}$_{\text{Top-}K}$
& 58.06 & 30.63 & 33.96 & 46.51 & 68.23 & 72.80 & 61.40 \\
& 70\% & ActTail
& \textbf{62.18} & \textbf{36.88} & \textbf{40.27} & \textbf{49.55} & \textbf{72.51} & \textbf{74.10} & \textbf{63.77} \\
\cline{2-10}
& 80\% & \textsc{TEAL}$_{\text{Top-}K}$
& 48.73 & 24.95 & 20.22 & 30.62 & 51.47 & \textbf{63.38} & 51.78 \\
& 80\% & ActTail
& \textbf{52.61} & \textbf{26.61} & \textbf{25.51} & \textbf{36.23} & \textbf{64.16} & 63.22 & \textbf{52.93} \\
\hline
\hline
\multirow{5}{*}{Mistral-7B}
& 0\% & Dense
& 69.72 & 62.65 & 50.34 & 61.24 & 83.64 & 80.58 & 73.80 \\
& 70\% & \textsc{TEAL}$_{\text{Top-}K}$
& 59.17 & 30.37 & 34.30 & \textbf{47.17} & \textbf{67.80} & \textbf{73.23} & 62.67 \\
& 70\% & ActTail
& \textbf{59.42} & \textbf{33.20} & \textbf{34.39} & 46.78 & 67.49 & \textbf{73.23} & \textbf{64.25} \\
\cline{2-10}
& 80\% & \textsc{TEAL}$_{\text{Top-}K}$
& 48.12 & 24.37 & 18.77 & 31.64 & 44.46 & 63.60 & \textbf{51.46} \\
& 80\% & ActTail
& \textbf{49.15} & \textbf{24.70} & \textbf{21.16} & \textbf{32.10} & \textbf{51.72} & \textbf{63.87} & 50.91 \\
\hline
\hline
\multirow{5}{*}{Qwen1.5-7B}
& 0\% & Dense
& 65.78 & 60.53 & 40.61 & 57.80 & 82.51 & 78.40 & 74.90 \\
& 70\% & \textsc{TEAL}$_{\text{Top-}K}$
& 54.48 & 31.99 & 31.23 & 43.24 & 67.06 & \textbf{71.11} & 53.04 \\
& 70\% & ActTail
& \textbf{55.85} & \textbf{36.50} & \textbf{32.00} & \textbf{44.66} & \textbf{69.69} & 70.78 & \textbf{58.25} \\
\cline{2-10}
& 80\% & \textsc{TEAL}$_{\text{Top-}K}$
& \textbf{53.32} & \textbf{25.84} & 19.62 & 30.72 & \textbf{60.73} & \textbf{64.15} & \textbf{50.99} \\
& 80\% & ActTail
& 52.47 & 25.32 & \textbf{21.76} & \textbf{30.77} & 60.40 & 61.37 & 50.91 \\
\hline
\end{tabular}%
}
\label{tab:downstream_all}

\end{table*}


\begin{assumption}
\label{assump:hw_bridge}
Fix $(\ell,p)$ and let $W:=W^{(\ell,p)}\in\mathbb{R}^{d_{\text{out}}\times d_{\text{in}}}$ and $X:=W^\top W$.
Let $x\in\mathbb{R}^{d_{\text{in}}}$ be mean-zero, isotropic, and sub-Gaussian with parameter $\kappa$, and set $y:=Wx$.
Assume the ESD of $X$ satisfies the condition in Lemma~\ref{lem:karamata_energy_capture} with exponent $\alpha^{(\ell,p)}=\alpha>2$.
Let $V:=WW^\top$ and define $v_i:=V_{ii}=\mathrm{Var}(y_i)$.
There exists a constant $\gamma\ge 1$ such that the decreasing rearrangements satisfy
\begin{equation}
\label{eq:diag_spectrum_comparable}
v_{(j)} \le \gamma \sigma_j^2,
\qquad \forall j\in\{1,\dots,d_{\text{out}}\},
\end{equation}
where $\sigma_1^2\ge\sigma_2^2\ge\cdots$ are the eigenvalues of $V$.
\end{assumption}

\begin{theorem}[Spectral energy concentration]
\label{thm:hw_bridge}
Under Assumption~\ref{assump:hw_bridge}, for any $\delta\in(0,1)$, with probability at least $1-\delta$,
the activation vector $y$ belongs to a weak-$\ell_{p(\alpha)}$ ball with $p(\alpha)=2(\alpha-1)$,
in the sense that
\begin{equation}
\label{eq:weaklp_y}
|y|_{(j)} \le C(\kappa,\gamma)\sqrt{\log \Big(\frac{d_{\text{out}}}{\delta}\Big)}  j^{-1/p(\alpha)},.
\end{equation}
for $\forall j\in\{1,\dots,d_{\text{out}}\}$. Consequently, by Lemma~\ref{lem:stechkin_topk}, for any target tolerance $\varepsilon>0$, choosing
\begin{equation}
\label{eq:K_alpha_eps}
K(\alpha,\varepsilon)\geq 
\left(
\frac{\sqrt{\log(d_{\text{out}}/\delta)}}{\varepsilon}
\right)^{\frac{2(\alpha-1)}{\alpha-2}}
\end{equation}
ensures $\|y-T_{K(\alpha,\varepsilon)}(y)\|_2\le \varepsilon$, and the corresponding activation sparsity ratio is $s(\alpha,\varepsilon)=1-\frac{K(\alpha,\varepsilon)}{d_{\text{out}}}$.
\end{theorem}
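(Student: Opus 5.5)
The plan has three steps: a coordinatewise sub-Gaussian tail bound with a union bound, a transfer of the spectral tail of $X$ into a decay rate for the variances $v_{(j)}$, and an application of Lemma~\ref{lem:stechkin_topk} to solve for $K$.

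\textbf{Step 1: coordinatewise concentration.} Each coordinate is $y_i=\langle W_{i\cdot},x\rangle$. Since $x$ is isotropic and sub-Gaussian with parameter $\kappa$, $y_i$ is sub-Gaussian with variance proxy $\lesssim \kappa^2 v_i$. Hence $\mathbb P(|y_i|>t)\le 2\exp(-t^2/(C\kappa^2 v_i))$. A union bound over the $d_{\text{out}}$ coordinates gives, with probability at least $1-\delta$, simultaneously for all $i$,
\begin{equation*}
|y_i|\le C\kappa\sqrt{v_i\log(2d_{\text{out}}/\delta)}.
\end{equation*}
Passing to decreasing rearrangements is legitimate because the map $i\mapsto \sqrt{v_i}$ is monotone under reordering. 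If every $|y_i|\le c\sqrt{v_i}$, then $|y|_{(j)}\le c\sqrt{v_{(j)}}$: the $j$ largest coordinates of $|y|$ involve at least one index whose $v$ is at most $v_{(j)}$. Combining with Assumption~\ref{assump:hw_bridge} yields
\begin{equation*}
|y|_{(j)}\le C\kappa\sqrt{\gamma}\,\sqrt{\log(d_{\text{out}}/\delta)}\,\sigma_j.
\end{equation*}

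\textbf{Step 2: decay of $\sigma_j$.} The nonzero eigenvalues of $V=WW^\top$ coincide with those of $X=W^\top W$, so $\sigma_j^2=\lambda_j$. Under the power-law tail of Lemma~\ref{lem:karamata_energy_capture}, rank $j$ corresponds to the upper quantile $q=j/n$. The quantile estimate in that lemma's proof gives $\lambda_j\asymp \tau_{j/n}\asymp (j/n)^{-1/(\alpha-1)}$, hence
\begin{equation*}
\sigma_j\lesssim j^{-1/(2(\alpha-1))}=j^{-1/p(\alpha)}.
\end{equation*}
The factor $n^{1/(2(\alpha-1))}$, the constant $c_0$, and the slowly varying factor are absorbed into a normalization of $W$ (equivalently, measured relative to $\sqrt{\lambda_1}$) and into $C(\kappa,\gamma)$.

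This step is the main obstacle. The quantile relation is only asymptotic and holds only in the regime of the power-law fit. Turning it into a bound valid for all $j\le d_{\text{out}}$ requires two things. First, for $j$ beyond the fit range, use monotonicity $\sigma_j\le\sigma_{j_0}$. Second, for small $j$, accept a constant loss from the slowly varying $\widetilde L$. I would state this explicitly as a uniform-constant version of the fit, so that the resulting $C(\kappa,\gamma)$ may implicitly depend on the fit constants. Combining Steps 1 and 2 gives \eqref{eq:weaklp_y}, i.e.
\begin{equation*}
\|y\|_{p(\alpha),\infty}\le C(\kappa,\gamma)\sqrt{\log(d_{\text{out}}/\delta)}.
\end{equation*}

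\textbf{Step 3: TopK error and choice of $K$.} Since $\alpha>2$, we have $p(\alpha)=2(\alpha-1)>2$, so Lemma~\ref{lem:stechkin_topk} applies. Its tail-sum bound gives a decay of the form $K^{\frac12-\frac1p}$ with $\frac1p-\frac12<0$ exponent sign handled as in that lemma. Concretely, I would redo the tail sum $\sum_{j>K} j^{-2/p}$ as in its proof, keeping the factor $\|y\|_{p,\infty}$. I would then impose
\begin{equation*}
C_p\, C(\kappa,\gamma)\sqrt{\log(d_{\text{out}}/\delta)}\;K^{-(\frac12-\frac1p)\cdot(\pm)}\le\varepsilon.
\end{equation*}
Solving for $K$ with exponent $\frac{2(\alpha-1)}{\alpha-2}=\bigl(\frac12-\frac1p\bigr)^{-1}\cdot\frac12\cdot 2$ gives \eqref{eq:K_alpha_eps}, up to constants absorbed into $C$. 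Setting $s=1-K/d_{\text{out}}$ is then just the definition of the sparsity ratio.

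I would flag that the sign of the exponent in Lemma~\ref{lem:stechkin_topk} must be handled carefully. The convergent bound actually requires $2/p>1$, so I would verify which direction the lemma's constant and exponent effectively give before solving for $K$.
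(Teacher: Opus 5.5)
Your Steps 1--2 follow the paper's own route: a sub-Gaussian tail bound with a union bound, sorting, $v_{(j)}\le\gamma\sigma_j^2$, and $\sigma_j^2\asymp j^{-1/(\alpha-1)}$ read off from Lemma~\ref{lem:karamata_energy_capture}. Your pigeonhole justification of $|y|_{(j)}\le c\sqrt{v_{(j)}}$ is more careful than the paper's ``sort and combine''. The normalization you absorb is the same implicit one the paper uses, and it is unavoidable, since rescaling $W$ rescales $y$ but leaves $\alpha,\kappa,\gamma$ unchanged. Your monotonicity patch beyond the fit range costs a factor $(d_{\text{out}}/j_0)^{1/p}$, which is harmless only if the fit covers a constant fraction of the spectrum.

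\textbf{The genuine gap is Step 3, and no amount of care with the sign will close it.} With $p=2(\alpha-1)>2$ we have $2/p<1$, so the envelope $j^{-1/p}$ is not square-summable. The most a weak-$\ell_p$ bound can give, with $C=C(\kappa,\gamma)$, is
\[
\|y-T_K(y)\|_2^2\le C^2\log(d_{\text{out}}/\delta)\sum_{j=K+1}^{d_{\text{out}}} j^{-2/p}\le C^2\log(d_{\text{out}}/\delta)\,\frac{\alpha-1}{\alpha-2}\Bigl(d_{\text{out}}^{\frac{\alpha-2}{\alpha-1}}-K^{\frac{\alpha-2}{\alpha-1}}\Bigr),
\]
which is small only when $K$ is comparable to $d_{\text{out}}$.

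Lemma~\ref{lem:stechkin_topk} is false on $p\in(2,\infty)$ for any dimension-free constant $C_p$; Stechkin's inequality needs $p<2$. Its proof evaluates the divergent integral $\int_K^\infty t^{-2/p}\,dt$ as $\frac{p}{2-p}K^{1-2/p}<0$, so $C_p=\sqrt{p/(2-p)}$ is not even real. The resulting bound $K^{\frac12-\frac1p}$ also increases with $K$. Honestly inverting $\sqrt{\log(d_{\text{out}}/\delta)}\,K^{\frac{\alpha-2}{2(\alpha-1)}}\le\varepsilon$ gives an \emph{upper} bound on $K$. The paper's proof reaches the lower bound \eqref{eq:K_alpha_eps} only by silently flipping the sign of that exponent, which is exactly the step you wrote as ``$(\pm)$''.

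In fact the ``consequently'' clause is false, not merely unproven. Realize the very spectrum your Step~2 produces: take $W$ diagonal with $W_{ii}^2=i^{-1/(\alpha-1)}$ and $x\sim N(0,I)$. Then:
\begin{itemize}
\item the ESD is a power law with exponent $\alpha$ over its whole support;
\item $V$ is diagonal, so $\gamma=1$;
\item \eqref{eq:weaklp_y} holds.
\end{itemize}
Yet $\|y\|_2^2$ concentrates around $\sum_{i\le d_{\text{out}}}i^{-1/(\alpha-1)}\asymp d_{\text{out}}^{(\alpha-2)/(\alpha-1)}$, while $\|T_K(y)\|_2^2\le K\|y\|_\infty^2=O(K\log(d_{\text{out}}/\delta))$ with high probability. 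The prescribed $K$ is only polylogarithmic in $d_{\text{out}}$, so $\|y-T_K(y)\|_2\to\infty$ as $d_{\text{out}}\to\infty$.

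What your Steps 1--2 (and the paper's argument) actually establish is the weak-$\ell_{2(\alpha-1)}$ envelope under an explicit normalization, together with the $d_{\text{out}}$-dependent tail bound displayed above. The $d_{\text{out}}$-free threshold $K(\alpha,\varepsilon)$ and the sparsity ratio $s(\alpha,\varepsilon)$ do not follow. You were right to flag the exponent, but the correct conclusion is that Step~3 fails, in your proposal and in the paper's proof alike.
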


\begin{proof}
For each coordinate $i$, $y_i=w_i^\top x$, where $w_i^\top$ is the $i$-th row of $W$.
Since $x$ is sub-Gaussian and isotropic, $y_i$ is sub-Gaussian with variance proxy $\|w_i\|_2^2=v_i$.
By Hanson-Wright inequality~\cite{vershynin2018high}, the rank-one quadratic form $(w_i^\top x)^2=x^\top (w_i w_i^\top)x$ becomes:
 \begin{equation}\label{eq:24}
 \begin{aligned}
\mathbb{P}\Big(|y_i| \ge t\Big)
 &\le 
2\exp \Big(-c \min \big(\tfrac{t^2}{\kappa^2 v_i}, \tfrac{t}{\kappa \sqrt{v_i}}\big)\Big)\\
 &\le 
2\exp \Big(-c \tfrac{t^2}{\kappa^2 v_i}\Big)
\end{aligned}
 \end{equation}
for all $t>0$ in the sub-Gaussian range, where $c>0$ is a constant.

Set
 \begin{equation}
t_i
:=
\kappa\sqrt{\frac{v_i}{c} \log \Big(\frac{2d_{\text{out}}}{\delta}\Big)}.
 \end{equation}
Plugging $t=t_i$ into \eqref{eq:24} gives
 \begin{equation}
 \begin{aligned}
\mathbb{P}\big(|y_i|\ge t_i\big)
&\le
2\exp \Big(-c \frac{t_i^2}{\kappa^2 v_i}\Big)\\
&=
2\exp \Big(-\log \Big(\frac{2d_{\text{out}}}{\delta}\Big)\Big)=
\frac{\delta}{d_{\text{out}}}.
\end{aligned}
 \end{equation}
Therefore, we have:
 \begin{equation}
\mathbb{P}\Big(\exists i\in[d_{\text{out}}]: |y_i|\ge t_i\Big)
\le
\sum_{i=1}^{d_{\text{out}}}\mathbb{P}\big(|y_i|\ge t_i\big)
\le
\delta.
 \end{equation}
With probability at least $1-\delta$, we have $|y_i|\le t_i$ for all $i$ simultaneously.
Thus we have:
 \begin{equation}\label{eq:28}
|y_i| \le C_0 \kappa \sqrt{v_i\log \Big(\frac{d_{\text{out}}}{\delta}\Big)}
\qquad \forall i.
 \end{equation}
By Lemma~\ref{lem:karamata_energy_capture}, the eigenvalues of $V=WW^\top$ obeying the same tail exponent $\alpha$ under Assumption~\ref{assump:hw_bridge} have a power-law order-statistics behavior on the fitted range, namely
 \begin{equation}
\sigma_j^2  \asymp  j^{-1/(\alpha-1)}
 \end{equation}
Again by Assumption~\ref{assump:hw_bridge} implies
 \begin{equation}
v_{(j)}  \le  \gamma \sigma_j^2  \leq  j^{-1/(\alpha-1)}.
 \end{equation}
Sort the coordinates by magnitude and combine \eqref{eq:28} with the bound on $v_{(j)}$:
 \begin{equation}
 \begin{aligned}
|y|_{(j)}
 &\le 
C_0 \kappa \sqrt{v_{(j)}\log \Big(\frac{d_{\text{out}}}{\delta}\Big)}\\
 &\leq 
\sqrt{\log \Big(\frac{d_{\text{out}}}{\delta}\Big)}  j^{-1/(2(\alpha-1))}.
\end{aligned}
 \end{equation}
This is exactly the weak-$\ell_{p(\alpha)}$ condition with $p(\alpha)=2(\alpha-1)$. Apply Lemma~\ref{lem:stechkin_topk} with $p=p(\alpha)$ and
$\|y\|_{p,\infty}\le1 \sqrt{\log(d_{\text{out}}/\delta)}$ to obtain
 \begin{equation}
 \begin{aligned}
\|y-T_K(y)\|_2
 &\leq 
\sqrt{\log \Big(\frac{d_{\text{out}}}{\delta}\Big)} 
K^{\frac12-\frac{1}{2(\alpha-1)}}\\
&=
\sqrt{\log \Big(\frac{d_{\text{out}}}{\delta}\Big)} 
K^{\frac{\alpha-2}{2(\alpha-1)}}.
\end{aligned}
 \end{equation}
Solving for $K$ to make $\|y-T_K(y)\|_2 \le \varepsilon$ gives \eqref{eq:K_alpha_eps}:
\begin{equation}
K(\alpha,\varepsilon)\geq 
\left(
\frac{\sqrt{\log(d_{\text{out}}/\delta)}}{\varepsilon}
\right)^{\frac{2(\alpha-1)}{\alpha-2}}
\end{equation}
\end{proof}
Theorem~\ref{thm:hw_bridge} shows that the heavy-tailed spectral exponent $\alpha$ directly determines how aggressively a
projection can be sparsified under a fixed error tolerance: via $p(\alpha)=2(\alpha-1)$ it controls the decay of
activation magnitudes and yields an explicit sparsity ratio $s(\alpha,\varepsilon)$.
Therefore, more heavy-tailed spectra (smaller $\alpha$) permit more aggressive activation sparsity than lighter one (larger $\alpha$), yielding an $\alpha$-guided principle for projection-level sparsity allocation. In the following section, we substantiate this insight through comprehensive experimental evaluations.

\section{Results}

\subsection{Experimental Settings}
\paragraph{Models and Evaluation.} 
We evaluate our approach on several decoder-based LLMs, including LLaMA2 7B and 13B \citep{touvron2023llama2}, Mistral 7B v0.1 \citep{jiang2023clip}, and Qwen1.5-7B\citep{qwen1.5}. Language modeling performance is measured by perplexity on WikiText2 \citep{merity2016pointer}. Downstream performance is evaluated using the EleutherAI LM Evaluation Harness \citep{eval-harness} on an aggregate of six tasks: 5-shot MMLU, 0-shot ARC Challenge, 0-shot HellaSwag, 0-shot PIQA, 0-shot BOOLQ and 10-shot WinoGrande \citep{hendrycks2021measuringmassivemultitasklanguage,clark2018thinksolvedquestionanswering, zellers2019hellaswagmachinereallyfinish, bisk2019piqareasoningphysicalcommonsense, sakaguchi2019winograndeadversarialwinogradschema,clark2019boolq}.

\begin{figure*}[!ht]
  \centering
  \includegraphics[width=0.82 \textwidth]{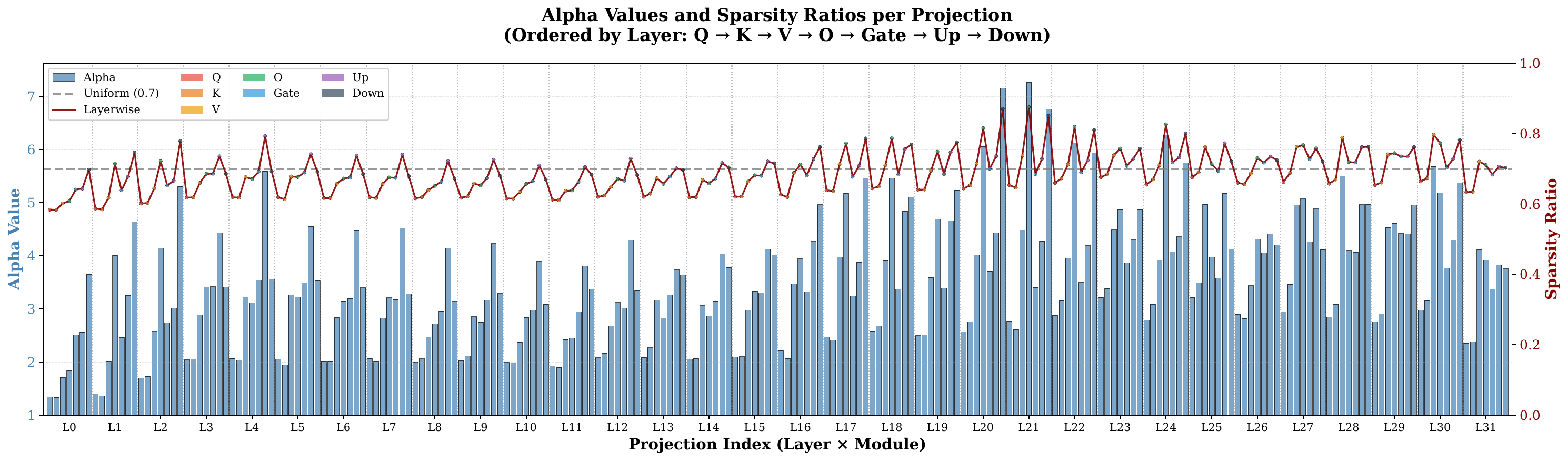}
  \caption{Alpha values and sparsity ratios per projection, ordered by layer and module type (Q, K, V, O, Gate, Up, Down).}
  \label{fig:alpha-sparsity}

\end{figure*}

\paragraph{Baselines.}
We primarily compare ActTail against TEAL(ICLR'2025)\citep{liu2025trainingfree}, a recent state-of-the-art activation sparsity method,which is traning free without any finetuning. To ensure a fair and controlled comparison, we use its TopK formulation with uniform sparsity allocation across projections. We report model-level sparsity for all methods. We do not include TEAL’s threshold-based version, as its input-dependent sparsity leads to varying effective sparsity levels across inputs, which prevents a controlled comparison at matched sparsity budgets.

\subsection{Main Results}
\paragraph{Language Modeling.}
Table~\ref{tab:ppl_main} reports perplexity on WikiText2 for LLaMA2, Mistral, and Qwen1.5 under different activation sparsity levels.
ActTail consistently outperforms uniform Top-$K$ allocation across all models, with the advantage becoming increasingly pronounced as sparsity grows.
At 70\% sparsity, ActTail reduces perplexity by 13.5\% on LLaMA2-7B, 9.2\% on LLaMA2-13B, 1.8\% on Mistral-7B, and 7.2\% on Qwen1.5-7B relative to uniform allocation.
The gains further amplify at 80\% sparsity, where ActTail achieves perplexity reductions of 21.8\% on LLaMA2-7B, 40.1\% on LLaMA2-13B, and 9.4\% on Mistral-7B, while still providing a consistent 1.8\% improvement on Qwen1.5-7B.
These results demonstrate that heavy-tail-guided projection-level sparsity allocation is particularly effective in the high-sparsity regime, where uniform heuristics incur substantial degradation.

\paragraph{Downstream Tasks.}
ActTail consistently improves downstream accuracy over uniform activation sparsity on LLaMA2-7B, LLaMA2-13B, Mistral-7B and Qwen1.5-7B across a diverse set of reasoning and knowledge benchmarks, including MMLU, ARC-c, HellaSwag, BOOLQ, PIQA and WinoGrande(Table~\ref{tab:downstream_all}).  At 70\% sparsity, ActTail improves the averaged score by 3.6\% on LLaMA2-7B and by 7.1\% on LLaMA2-13B relative to uniform Top-K sparsification. Similar performance gains are also observed on Mistral-7B and Qwen1.5-7B. The performance advantage of ActTail remains evident at 80\% sparsity. At 80\% sparsity, ActTail improves the averaged performance by 2.9\% on LLaMA2-7B, 8.0\% on LLaMA2-13B, and 2.1\% on Mistral-7B relative to uniform ones. Among all evaluated benchmarks, ActTail exhibits the most pronounced improvements on BOOLQ. 
Specifically, ActTail improves BOOLQ accuracy by 27.6\% on LLaMA2-7B, 24.6\% on LLaMA2-13B, and 16.3\% on Mistral-7B at 80\% sparsity. Overall, these results confirm that leveraging heavy-tailed spectral heterogeneity enables more robust activation sparsity, yielding superior accuracy and efficiency trade-offs across model sizes and task types.

\subsection{Empirical Analysis of Heavy-Tailed Metrics and Activation Sparsity}
To better understand how the heavy-tailed self-regularization (HT-SR) metric influences activation sparsity decisions in practice, we analyze the relationship between the estimated heavy-tail exponent $\alpha$ and the resulting sparsity ratios across projections. Figure~\ref{fig:alpha-sparsity} provides a global view of projection-level $\alpha$ and sparsity ratios, while Figure~\ref{fig:appendix-alpha-sparsity} groups them by module type(Q, K, V, O, Gate, Up, Down) for module-wise comparison. Several clear patterns emerge from this analysis.

First, projections with smaller $\alpha$ values are assigned lower activation sparsity, as heavier-tailed projections encode more concentrated representations that are less tolerant to aggressive sparsification. Second, sparsity allocation varies systematically across module types. Attention projections, especially Q and O, tend to have smaller $\alpha$ values and receive lower sparsity, whereas MLP projections (Gate, Up, Down) show larger $\alpha$ values and tolerate higher sparsity. This explains the suboptimality of uniform or layerwise sparsity schemes. Finally, sparsity closely follows the variation of $\alpha$ across depth rather than remaining constant, indicating that the proposed allocation adapts across both layers and projection types.

\section{Conclusion}
ActTail uses Heavy-Tailed Self-Regularization theory to show that different projections have different spectral behavior, and uses this to set TopK input activation sparsity per projection. We give theory linking heavy-tailed spectral statistics to projection sparsifiability, explaining why heavier-tailed projections can be sparsified more aggressively with limited loss. Experiments on multiple large language model families show consistent gains in perplexity and downstream performance at high sparsity, supporting HT-SR as a practical basis for theory-guided activation sparsity.

\section*{Impact Statement}
This paper presents work whose goal is to advance the field of machine learning. There are many potential societal consequences of this work, none of which we believe require specific discussion beyond those commonly associated with research on large language model efficiency.

\bibliography{example_paper}
\bibliographystyle{icml2026}



\end{document}